\documentclass[conference]{ieeeconf}

\IEEEoverridecommandlockouts
\usepackage[utf8]{inputenc}
\usepackage{comment}
\usepackage{graphicx}
\usepackage{xcolor}
\usepackage{wrapfig}
\usepackage{lipsum}
\definecolor{rred}{RGB}{169, 50, 38}
\definecolor{ggreen}{RGB}{34, 153, 84}
\definecolor{bblue}{RGB}{36, 113, 163}
\definecolor{ppurple}{RGB}{125, 60, 152}
\definecolor{yyellow}{RGB}{214, 137, 16}
\definecolor{ggrey}{RGB}{112, 123, 124}

\usepackage{mathtools}
\usepackage{amsmath}
\usepackage{amssymb}
\usepackage{amsthm}
\theoremstyle{definition}
\usepackage{tikz}
\usetikzlibrary{automata,positioning,angles,quotes}
\usepackage{tikz-cd}
\usepackage[cal=cm, scr=dutchcal]{mathalfa}

\newtheoremstyle{main}
{0.3em} 
{0.3em} 
{\normalfont} 
{0pt} 
{\bfseries} 
{} 
{1pt} 
{\thmname{#1}\thmnumber{ #2. }\thmnote{\normalfont\itshape (#3)\normalfont:\\}}
\newtheoremstyle{math}
{0.2em} 
{0.2em} 
{\itshape} 
{0pt} 
{\bfseries} 
{} 
{1pt} 
{\thmname{#1}\thmnumber{ #2. }\thmnote{\normalfont\itshape (#3)\normalfont:\\}}
\theoremstyle{main}
\newtheorem{definition}{Definition} 

\theoremstyle{math}
\newtheorem{theorem}{Theorem} 

\makeatletter
\let\NAT@parse\undefined
\makeatother
\usepackage{cite}
\usepackage[pdfa,colorlinks,bookmarks=true,bookmarksopen,bookmarksnumbered,allcolors=ggreen]{hyperref}

\usepackage[algoruled,vlined,linesnumbered]{algorithm2e}

\SetAlgoSkip{SkipBeforeAndAfter}

\SetCommentSty{mycommfont}
\SetKw{Continue}{continue}
\SetAlgorithmName{Algorithm}{Alg.}

\usepackage{booktabs}
\usepackage{multirow}
\usepackage{multicol}
\usepackage{makecell}

\usepackage[font=footnotesize]{caption}
\usepackage[font=footnotesize]{subcaption}
\usepackage[export]{adjustbox}

\usepackage[
activate   = {true},
protrusion = true,
expansion  = true,
kerning    = true,
spacing    = true,
tracking   = false,
auto       = true,
selected   = true,
factor     = 1000,
stretch    = 25,
shrink     = 25,
]{microtype}

\usepackage{xspace}

\newcommand{\method}{\textsc{CoAd}\xspace}

\newcommand{\map}{\mathcal{M}\xspace}
\newcommand{\adapt}{\mathscr{A}\xspace}
\newcommand{\library}{\ensuremath{L}\xspace}
\newcommand{\rt}{\text{root}}
\newcommand{\trajrep}{\ensuremath{\Pi}\xspace}
\newcommand{\T}{\mathbf{T}\xspace}

\newcommand{\start}{\text{start}}
\newcommand{\goal}{\text{goal}}

\newcommand{\DOF}{\textsc{DOF}\xspace}

\newcommand{\SEThree}{\ensuremath{SE(3)}\xspace}

\newcommand{\C}{\ensuremath{\mathcal{C}}\xspace}
\newcommand{\Cfree}{\ensuremath{\C_{\text{free}}}\xspace}
\newcommand{\Cobs}{\ensuremath{\C_{\text{obs}}}\xspace}
\newcommand{\Cgoal}{\ensuremath{\C_{\text{goal}}}\xspace}

\newcommand{\qs}{\ensuremath{q_\start}}

\newcommand{\qtraj}{\ensuremath{\pi}}

\newcommand{\Ts}{{\ensuremath{\mathcal{T}}}\xspace}

\newcommand{\fk}{\mathtt{FK}}
\newcommand{\ik}{\mathtt{IK}}

\title{\LARGE \bf
\method: Constant-Time Planning for Continuous Goal Manipulation \\ 
with Compressed Library and Online Adaptation
}

\author{
  Adil Shiyas$^*$, Zhuoyun Zhong$^*$, and Constantinos Chamzas
  \thanks{%
   All authors are affiliated with the Department of Robotics Engineering, Worcester Polytechnic Institute (WPI), Worcester, MA 01609, USA {\tt\small \{mshiyas, zzhong3, cchamzas\} @ wpi.edu}.
  }
  \thanks{$^{*}$ Adil Shiyas and Zhuoyun Zhong have equal contributions.}
}

\begin{document}

\maketitle
\thispagestyle{empty}
\pagestyle{empty}

\begin{abstract}
In many robotic manipulation tasks, the robot repeatedly solves motion-planning problems that differ mainly in the location of the goal object and its associated obstacle, while the surrounding workspace remains fixed.
Prior works have shown that leveraging experience and offline computation can accelerate repeated planning queries, but they lack guarantees of covering the continuous task space and require storing large libraries of solutions.
In this work, we present \method, a framework that provides constant-time planning over a continuous goal-parameterized task space.
\method discretizes the continuous task space into finitely many Task Coverage Regions.
Instead of planning and storing solutions for every region offline, it constructs a compressed library by only solving representative root problems.
Other problems are handled through fast adaptation from these root solutions.
At query time, the system retrieves a root motion in constant time and adapts it to the desired goal using lightweight adaptation modules such as linear interpolation, Dynamic Movement Primitives, or simple trajectory optimization.
We evaluate the framework on various manipulators and environments in simulation and the real world, showing that \method achieves substantial compression of the motion library while maintaining high success rates and sub-millisecond-level queries, outperforming baseline methods in both efficiency and path quality.
The source code is available at  
\url{https://github.com/elpis-lab/CoAd}
\end{abstract}

\section{Introduction}

Motion planning is a central problem in robotics, enabling autonomous systems to plan collision-free motions while satisfying kinematic and geometric constraints \cite{choset2005principles}. In robotic manipulation, this problem becomes particularly challenging due to high-dimensional configuration spaces, complex robot geometry, and cluttered workspaces. 
Despite these challenges, significant progress has been made over the years~\cite{bekris2024state}, and recent efforts have focused on leveraging experience and pre-computation to accelerate repeated queries~\cite{ishida2025coverlib}.

\begin{figure}[ht!]
    \centering
    \includegraphics[width=0.85\linewidth]{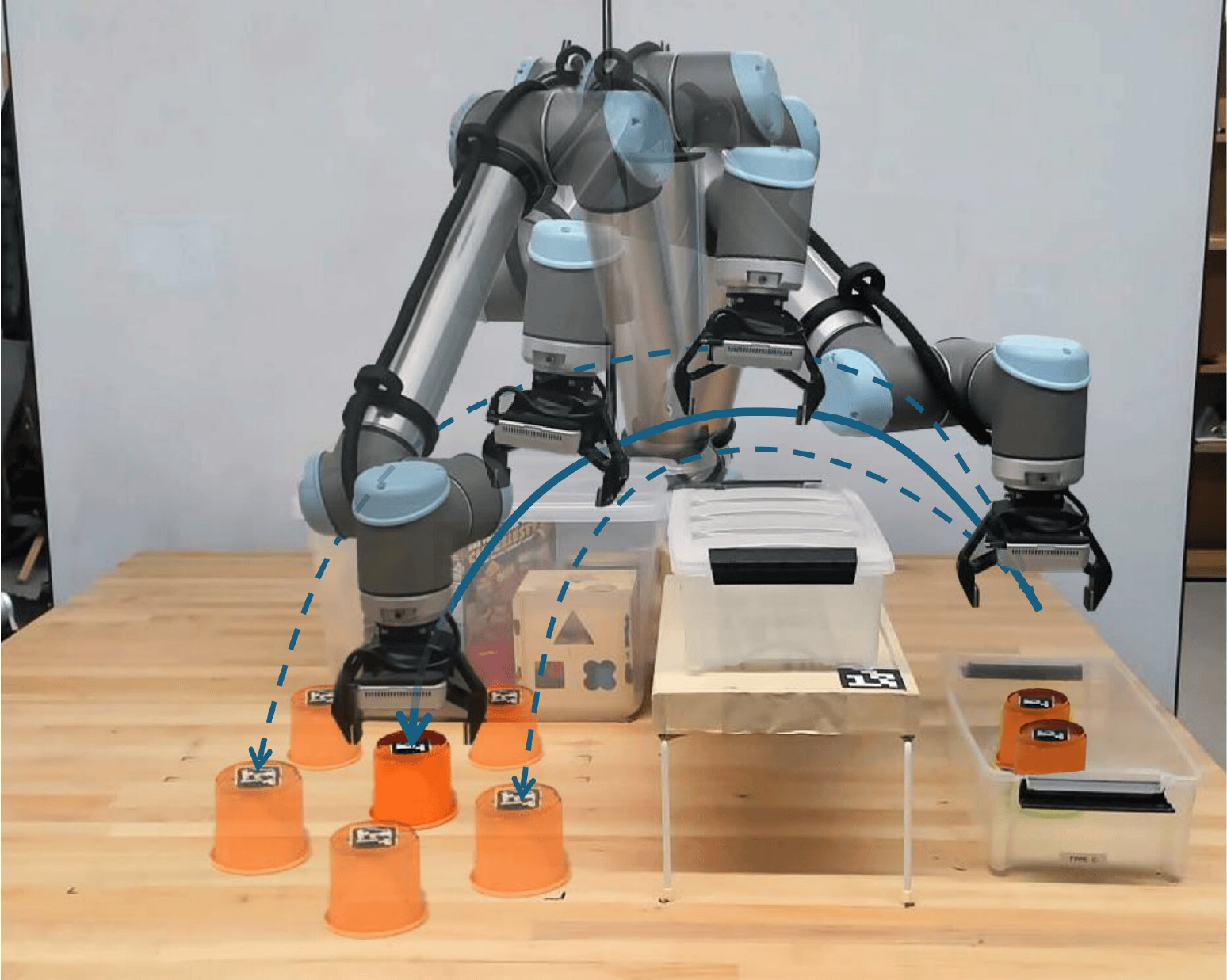}
    \caption{A physical UR10 robot repeatably solves motion-planning problems to grasp an object whose pose continuously varies across queries, while the rest of the workspace remains unchanged.} 
    \label{fig:intro}
\vspace{-3ex}
\end{figure}

Library-based \cite{experience_framework} and experience-driven planners \cite{experience_sparse_roadmap} store and adapt previously computed solutions, often achieving substantial speedups in structured settings.
However, despite their practical effectiveness, these approaches typically provide no guarantees that all task instances drawn from a continuous distribution will be solvable. 

In this work, we consider a structured yet practically important setting in which the distribution of planning problems is known.
Specifically, we study manipulation tasks such as the tabletop picking scenario shown in \autoref{fig:intro}, where a robot must repeatedly grasp a known object placed at varying poses while other workspace obstacles remain fixed.
Such settings commonly arise in production workcells for tasks such as kitting, packaging, and pick-and-place.
We refer to this family of problems as \textbf{goal-varying motion planning}, where each planning query differs only in the pose of a known object within a bounded region of $\SEThree$.

Although this setting appears simple, it still induces an infinite family of motion planning problems due to the \emph{continuous goal} pose.
\method addresses this by leveraging the following key property: if a path is valid for a nominal object pose and remains valid under bounded perturbations of that pose, then that single path certifies feasibility for an entire continuous subset of nearby object placements.
This property allows the continuous goal space to be covered by finitely many regions, where a single path certifies feasibility for all goals within each region.
Furthermore, rather than computing a distinct motion plan for every region, \method constructs a \emph{compressed library} of root paths. 
The remaining regions are handled through fast \emph{adaptation} of these root paths toward their region goals, while preserving feasibility within their certified coverage.
At query time, \method identifies the region containing the query goal, retrieves the corresponding root path from library, and adapts it online to the region goal.

To the best of our knowledge, this is the first framework that provides constant-time guaranties for continuous goal-parameterized motion planning domains. Prior constant-time approaches assume a discrete representation of the goal space~\mbox{\cite{constant_belt}}. Concretely, our contributions  are:
\begin{itemize}
\item

We present a partitioning framework that enumerates an infinite family of planning queries to a finite set of paths, each associated with a certified region of goal poses.
\item We propose an adaptation framework that deforms root paths toward new goal poses while preserving feasibility within their certified regions, enabling efficient storage and reuse of prior paths.
\item We demonstrate the effectiveness of \method on manipulation tasks with 7-DOF and 8-DOF manipulators, showing compression of up to 97\% of the full path library while achieving query times as low as 0.03 ms.
\end{itemize}
\section{Related Work}

\subsection{Classical Motion Planning}

Motion planning has been studied extensively for several decades. Existing planners broadly fall into three categories. Sampling-based motion planners (SBMPs), such as PRM \cite{kavraki1998analysis} and RRT-Connect \cite{rrt-connect}, are \emph{probabilistically complete}: if a solution exists, they will find it with probability one as the computation time approaches infinity. 
However, they provide no deterministic runtime guarantees. Graph-based planners are \emph{resolution complete}, guaranteeing a solution at a fixed discretization resolution, but they scale poorly in high-dimensional manipulation spaces. 
Optimization-based planners \cite{trajopt, chomp} often achieve fast performance in practice but lack global guarantees and may converge to local minima. While these methods are general-purpose and applicable to arbitrary motion planning problems, they do not provide guarantees over structured families of problems drawn from a continuous task distribution.

\subsection{Experience-Based Motion Planning}

Experience-based motion planning seeks to accelerate repeated queries by equipping robots with memory and reusing prior solutions in related environments. 
Rather than planning from scratch, these methods retrieve and adapt stored trajectories \cite{experience_framework, memory_of_motion, ishida2025coverlib}, roadmaps \cite{experience_sparse_roadmap}, or learned sampling strategies \cite{chamzas2021-learn-sampling}. 
Such approaches have demonstrated substantial speedups in challenging manipulation tasks. 
However, they typically assume that future planning queries are drawn from an unknown distribution and do not provide guarantees that accumulated experience will cover all admissible problem instances. As a result, they rely on fallback planners to ensure robustness.

\subsection{Constant-Time and Precomputation-Based Planning}

More closely related to our work are fixed-time motion planning approaches \cite{constant_belt, islam2021alternative, cover, van2005roadmap}, which shift computational effort to an offline phase in order to guarantee bounded query-time performance. 
These methods pre-compute solutions for a finite set of task or obstacle configurations and retrieve appropriate trajectories at runtime. While they provide strong runtime guarantees, their results rely on a pre-discretized task space.
In contrast, our framework provides constant-time guarantees over a continuous goal-parameterized domain. The work from \cite{cover} also operates in continuous spaces, but focuses on movable obstacles while the goal is fixed. We focus on the complementary problem where the goal is varied.
\begin{figure*}[ht]
    \centering
    \includegraphics[width=0.9\linewidth]{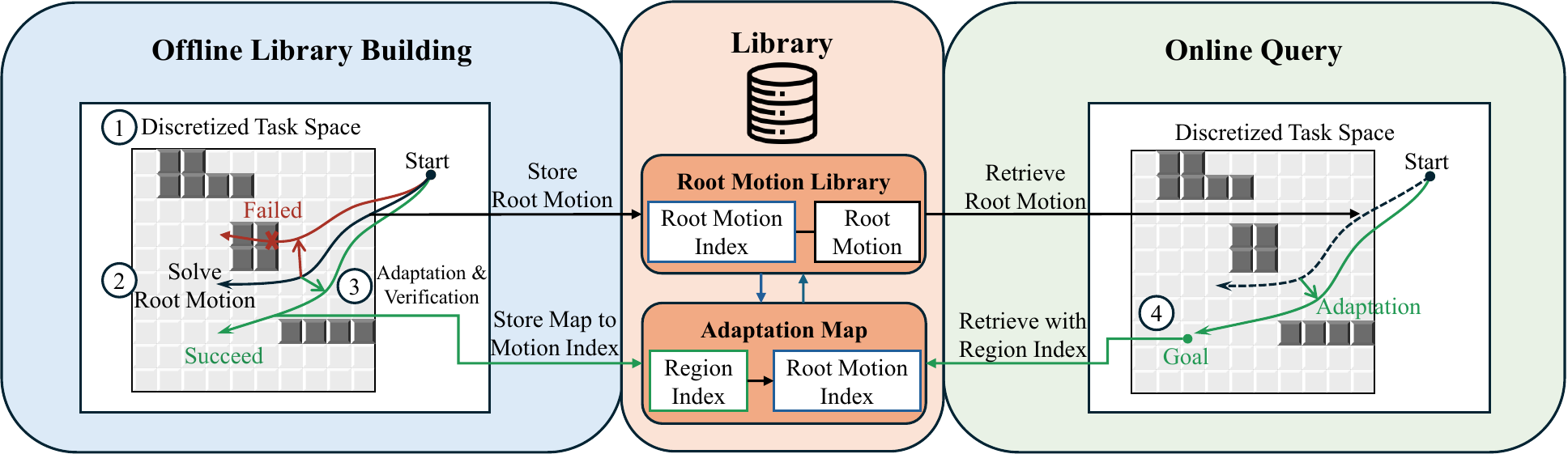}
    \caption{\method consists of an offline library-building stage and an online query stage. \textbf{1)} The continuous task space is discretized into a finite set of TCRs, each indexed for constant-time lookup. \textbf{2)} During offline preprocessing, an uncovered region is selected, a motion planner finds a \emph{root motion} for that region, and the resulting path is stored in the root-motion library. \textbf{3)} The same root motion is then adapted and verified on neighboring regions; successful adaptations are recorded in an adaptation map that links each covered region index to a root-motion index. \textbf{4)} During the online query stage, the sensed Goal pose is mapped to its region index, the corresponding root motion is retrieved in constant time, and the motion is adapted to satisfy the queried goal condition using an adaption algorithm. This process yields a compressed library that provides constant plans and covers the full task space.}
    \label{fig:method}
\vspace{-4ex}
\end{figure*}

\section{Notations and Problem Statement}
\label{sec:problem}

Let $q \in \C$ denote the configuration and configuration space of a robot. 
Let $\Cobs$ denote the obstacle (invalid) configuration space, and denote the collision-free space as $\Cfree=\C \setminus \Cobs$.

\subsection{Motion Planning Problem} 
Given a start configuration $\qs \in \Cfree$ and a goal region $\Cgoal \subseteq \Cfree$, motion planning problem tries to find a continuous path $\qtraj : [0,1] \rightarrow \Cfree$ such that $\qtraj(0)=\qs, \,\, \qtraj(1)\in \Cgoal$.

\subsection{Goal-varying motion planning problem}
In this work, we study a class of manipulation planning problems in which the goal depends on the pose of a movable object. 
Let $\T_o \in \Ts \subseteq SE(3)$ denote the pose of a target object $o$, whose placement varies across queries within the bounded task space $\Ts$ while the environment remains static. 
Such settings are common in tasks such as robotic grasping, where the robot must reach a grasp pose relative to the object(\autoref{fig:intro}).
For each object pose $\T_o$, the task induces a goal region in the robot
configuration space, denoted $\Cgoal(\T_o)$. Because the object itself occupies space in the workspace, its pose also affects the set of valid robot configurations. Consequently, the collision-free configuration space becomes object-dependent and is denoted $\Cfree(\T_o)$. 

Given an object pose $\T_o \in O$, a start configuration $\qs \in \Cfree(\T_o)$ and a goal region $\Cgoal(\T_o)$, we aim to compute a continuous path
$\qtraj : [0,1] \rightarrow \Cfree(\T_o) $
such that $ \qtraj(0) = \qs,~\qtraj(1) \in \Cgoal(\T_o), $
if such a path exists.
We denote the family of these problems as: 
$$p(\Ts) = \{(\qs, \Cgoal(\T_o), \Cfree(\T_o)) | \T_o \in \Ts \}$$ 
where both the goal region and the collision-free space vary continuously with $\T_o$.

\section{Methodology}
Since the set of motion planning problems $p(\mathcal{T})$ contains infinitely many instances, computing a unique path for each is infeasible. Instead, we leverage the observation that a single path can remain valid for a continuous subset of object poses.

In \autoref{sec:methodology-tsr}, we show how this property allows us to
partition the task space into a finite set of regions. In
\autoref{sec:methodology-library} and \autoref{sec:methodology-adapt}, we describe how \method leverages these finite regions to construct a motion-plan library that covers the entire domain. Finally, we show in \autoref{sec:methodology-online} that it enables constant-time solution for any problem $\T_o \in \Ts$. The overall framework is illustrated in \autoref{fig:method}.

\subsection{Task Space Discretization}
\label{sec:methodology-tsr}

\subsubsection{Task Space Region}
We begin by describing the goal regions $\Cgoal(\T_o)$ induced by the pose of the object using the \emph{Task Space Regions} (TSRs)~\cite{tsr}.
A TSR defines a set of admissible end-effector poses relative to an object that are valid for completing the task (e.g., grasping), as in \autoref{fig:proof} (a).

Formally, let $\T_\delta \in \SEThree$ denote a displacement transform bounded by a box $\mathbf{B} \subset \mathbb{R}^6$ specifying translational and rotational limits. 
The resulting displacement set is denoted $\Delta(\mathbf{B})$.
Let $\T_{s} \in \SEThree$ denote a fixed end-effector offset relative to the object frame, and let $\T_o$ denote the pose of object $o$.
The corresponding TSR is defined as:

\begin{equation}
\label{eq:tsr}
\mathrm{TSR}(\T_p)
=
\{ \T_e \mid \exists \T_\delta \in \Delta(\mathbf{B}) 
\text{ s.t. } \T_e = \T_o \T_\delta \T_{s} \}.
\end{equation}

Geometrically, the TSR is generated by allowing the local offset pose \(\T_\delta\) to vary over a \(6\)-dimensional box, and mapping each such offset to an end-effector pose via rigid body transformations \(\T_e = \T_p \T_\delta \T_{s}\). Thus, in the local coordinates of \(\T_\delta\), the TSR is exactly a box-shaped region.

The goal region in configuration space is therefore:
\begin{equation}
\Cgoal(\T_o) =
\{ q \mid \fk(q) \in \mathrm{TSR}(\T_p) \},
\end{equation}
where $\fk$ denotes the forward kinematics of the robot.

\subsubsection{Shared goals across object poses}
Consider two object poses $\T_{o1}$ and $\T_{o2}$ whose TSRs overlap,
as illustrated in \autoref{fig:proof} (b).
In this case, there exists an end-effector pose $\T_e$ that lies in the intersection of the two TSRs and therefore constitutes a valid goal for both object placements.

More importantly, because the TSR varies continuously with the object pose, this shared end-effector pose remains valid for all intermediate object poses along any continuous transformation from $\T_{o1}$ to $\T_{o2}$, as illustrated in \autoref{fig:proof}(c).
This observation implies that a single end-effector goal can be valid  for continuous subset of object poses.

\subsubsection{Task Coverage Regions}
Motivated by this observation, we introduce the notion of a Task Coverage Region (TCR).
Rather than associating each object pose with a set of valid end-effector poses (as in TSRs), we invert this relationship and define the set of object poses that a fixed end-effector pose can solve.

\begin{definition}[Task Coverage Region]
Given a TSR with  $\T_{\delta}$, $\Delta(\mathbf{B})$ and $\T_{s}$, we define the Task Coverage Region (TCR) of an end-effector pose $\T_e$ as the set of object poses for which $\T_e$ lies within their TSRs:
\begin{equation}
\label{eq:tcr}
\mathrm{TCR}(\T_e)
=
\{ \T_o \mid \T_e \in \mathrm{TSR}(\T_o) \}.
\end{equation}
Substituting the TSR definition from \autoref{eq:tsr} yields
\begin{equation}
\mathrm{TCR}(\T_e)
=
\{ \T_e \T_{s}^{-1} \T_\delta^{-1} \mid \T_\delta \in \Delta(\mathbf{B}) \}.
\end{equation}
\end{definition}

\begin{figure}[ht]
    \centering
    \includegraphics[width=0.95\linewidth]{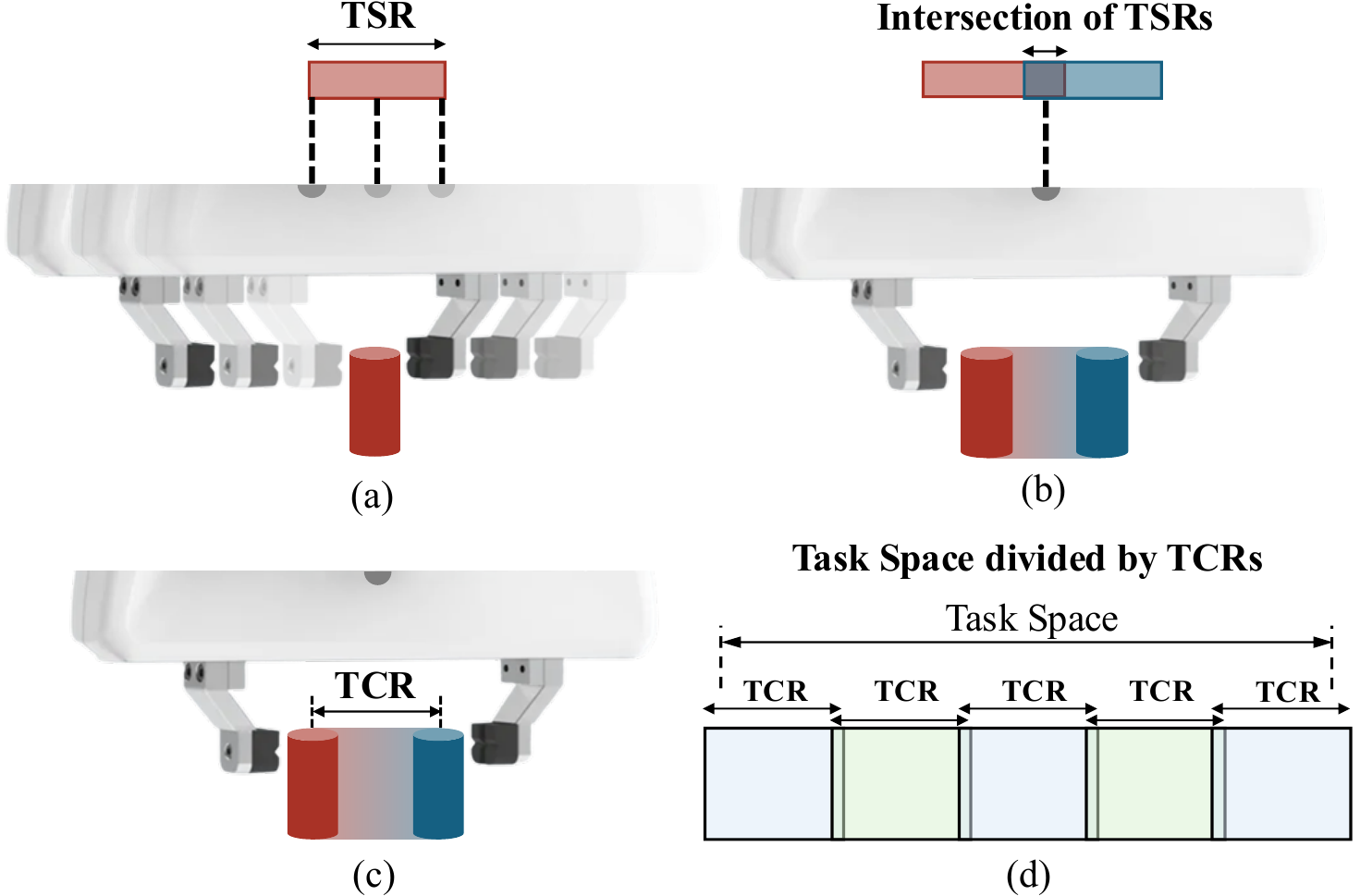}
    \caption{2D illustration of TSRs and TCRs.} 
    \label{fig:proof}
\vspace{-2ex}
\end{figure}

Intuitively, a TCR describes the region of object poses in \(\mathcal{T}\) for which a fixed end-effector pose \(\T_e\) remains valid.
Geometrically, a TCR is obtained by allowing the local offset pose \(\T_\delta\) to vary over the same \(6\)-dimensional box and solving for the corresponding object pose through: \mbox{$\T_o = \T_e \T_{s}^{-1} \T_\delta^{-1}$}. However, due to the inversion this region is not an axis-aligned box in the local coordinates of~\(\mathcal{T}\).

\paragraph{Finite coverage of the task space}
The introduction of TCRs enables us to reduce the continuous task space $\Ts$ to a finite set of representative end-effector goals.
Specifically, we seek a set of end effector poses $\{\T_e^i\}_{i=1}^{N}$ such that every task pose $\T_p \in \Ts$ lies within at least one coverage region:

\begin{equation}
\label{eq:cover}
\Ts \subseteq \bigcup_{i=1}^{N} \mathrm{TCR}(\T_e^i).
\end{equation}

Each representative end-effector pose $\T_e^i$ corresponds to a an end-effector pose that is valid for all object poses within its task coverage region.

Since TCRs may have irregular shapes and may overlap, 
to enable constant-time indexing, we discretize the task space $\mathcal{T}$ into a grid of axis-aligned boxes. Every box should lie entirely inside one TCR to ensure that every task pose $\T_p$ in the cell can be solved by the corresponding TCR solution pose $\T_e$, as in \autoref{fig:proof} (d). 

We now prove that such a box exists inside every TCR when the TSR bound set $\Delta(\mathbf{B})$ has non-empty interior. We can conservatively extract such an inner box from each TCR and use the smallest side lengths as a uniform grid size.

\begin{theorem}[]
\label{thm:inner-box-se3}
Let $\Delta(\mathbf{B}) \subset SE(3)$ be the bounded displacement set of $\mathrm{TSRs}$. 
If $\Delta(\mathbf{B})$ has non-empty interior, then for any fixed $\T_e, \T_{s} \in SE(3)$, $\mathrm{TCR}(\T_e)$ also has non-empty interior and contains an axis-aligned box $\mathbf{B}_{\mathrm{in}}$.
\end{theorem}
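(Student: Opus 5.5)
The plan is to realize $\mathrm{TCR}(\T_e)$ as the image of $\Delta(\mathbf{B})$ under a homeomorphism of $SE(3)$, and then work in local coordinates. Define
\begin{equation*}
\Phi : SE(3) \to SE(3), \qquad \Phi(\T_\delta) = \T_e \T_{s}^{-1} \T_\delta^{-1}.
\end{equation*}
By the definition of a TCR, $\mathrm{TCR}(\T_e) = \Phi(\Delta(\mathbf{B}))$. The map $\Phi$ is a composition of group inversion and left multiplication by the fixed element $\T_e\T_s^{-1}$. Both operations are smooth on the Lie group $SE(3)$, and both have smooth inverses. Hence $\Phi$ is a diffeomorphism, with inverse $\T_o \mapsto (\T_o^{-1}\T_e\T_s^{-1})$. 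In particular $\Phi$ is an open map.

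The first step is immediate from this. Let $U \subseteq \Delta(\mathbf{B})$ be a nonempty open set, which exists by hypothesis. Then $\Phi(U)$ is a nonempty open subset of $SE(3)$ contained in $\mathrm{TCR}(\T_e)$. Therefore $\mathrm{TCR}(\T_e)$ has non-empty interior.

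The second step produces the axis-aligned box. Pick a point $\T^* \in \Phi(U)$. I would use the same local coordinate chart on $\Ts$ that the paper uses to define axis-aligned boxes for the grid: translation in $\mathbb{R}^3$ together with a rotation parametrization such as roll--pitch--yaw or exponential coordinates. This chart is a homeomorphism $\varphi$ from a neighborhood $V$ of $\T^*$ onto an open subset of $\mathbb{R}^6$. I would shrink $V$ so that $V \subseteq \Phi(U)$. Then $\varphi(V)$ is open in $\mathbb{R}^6$ and contains $\varphi(\T^*)$, so it contains an open $\ell_\infty$-ball. That ball is a product of intervals, i.e.\ an axis-aligned box $\prod_{k=1}^6 [a_k,b_k]$ with $a_k<b_k$. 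Taking a slightly smaller closed box keeps it inside. Pulling it back gives $\mathbf{B}_{\mathrm{in}} = \varphi^{-1}\bigl(\prod_k [a_k,b_k]\bigr) \subseteq \mathrm{TCR}(\T_e)$.

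The main obstacle is being careful about coordinates. The chart must be valid near $\T^*$; for example, roll--pitch--yaw angles are singular at gimbal lock. If $\T^*$ happens to sit at such a singularity, I would first move $\T^*$ within the open set $\Phi(U)$ to a regular point of the chart. This is possible because the singular set has empty interior. Alternatively, I would work in the chart centered at $\T^*$ in which boxes are measured.

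A quantitative version, useful for choosing a grid size, can be obtained by differentiating $\Phi$. Its differential at $\T_\delta$ is $-\mathrm{Ad}_{\T_e\T_s^{-1}}$ composed with the differential of inversion. By the inverse function theorem, its Lipschitz bounds give an explicit lower bound on the box side lengths in terms of the side lengths of $\mathbf{B}$. For existence, however, openness suffices.
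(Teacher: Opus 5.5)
Your proposal is correct and takes essentially the same route as the paper's proof sketch: write $\mathrm{TCR}(\T_e)$ as the image of $\Delta(\mathbf{B})$ under a diffeomorphism of $SE(3)$, conclude that it contains a nonempty open set, and extract an axis-aligned box in the task-space chart. Your version is in fact the more careful one: your map $\Phi(\T_\delta)=\T_e\T_s^{-1}\T_\delta^{-1}$ includes the left translation by $\T_e\T_s^{-1}$ that the paper's bare inversion $f(\T)=\T^{-1}$ omits (without it, $f(U)\subset\mathrm{TCR}(\T_e)$ need not hold), and you justify the chart-and-box step that the paper only asserts.
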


\begin{proof}[Proof Sketch]
Define map $f(\T) = \T^{-1}$.
The inversion is diffeomorphisms on $SE(3)$, hence $f$ is a diffeomorphism and therefore an open map.
If $\Delta(\mathbf{B})$ contains an open set $U$, then $f(U)$ is open and satisfies $f(U) \subset \mathrm{TCR}(\T_e)$.
Therefore, $\mathrm{TCR}(\T_e)$ contains an open set and has non-empty interior.
Consequently, under the chosen task-space coordinates, it contains an axis-aligned box with non-empty interior.
\end{proof}

For $\SEThree$, deriving such a box from the TSR bound $\mathbf{B}$ is highly non-trivial. Although $\Delta(\mathbf{B})$ is defined as a simple box bound, the inversion transformation (\autoref{eq:tcr}) generally distorts the original bound and results in a complicated region.

However, in this work, we focus on a common picking setting in which the task poses' roll and pitch are fixed, and vary only in $(x, y, z, \psi)$, where $\psi$ denotes yaw.
In this case, a conservative inner bound for $\mathrm{TCR}(\T_e)$ can be derived.
Let the TSR bounds be
$|\Delta x|\le b_x$, $|\Delta y|\le b_y$, $|\Delta z|\le b_z$, and $|\Delta\psi|\le b_\psi$.
After inversion, the $(x, y)$ translation components become coupled with yaw and no longer remain axis-aligned. 
To obtain a yaw-invariant inner bound, we inscribe an axis-aligned square inside the rotated $(x, y)$ region, and a conservative bound is
\begin{equation}
\label{eq:hxy}
h_{xy} = \frac{\min(b_x,b_y)}{\sqrt{2}},
\quad
|\Delta x|\le h_{xy},\;\;|\Delta y|\le h_{xy}.
\end{equation}
The $z$ and yaw components remain decoupled from the rotation about the $z$-axis and thus retain their original bounds.
Combining these bounds yields a conservative axis-aligned inner box
$\mathbf{B}_{\mathrm{in}}$ for the TCR, with widths
\begin{equation}
\label{eq:cellwidth}
\delta_x=2h_{xy},\quad
\delta_y=2h_{xy},\quad
\delta_z=2b_z,\quad
\delta_\psi=2b_\psi.
\end{equation}

Let the bounded task space be 
$x\in[x_{l}, x_{h}]$,
$y\in[y_{l}, y_{h}]$,
$z\in[z_{l}, z_{h}]$,
$\psi\in(0, 2\pi]$.
We can map any query pose $(x, y, z, \psi)$ in task space to a grid index:
\begin{equation}
\label{eq:tcr-index}
i_x=\lfloor \frac{x-x_{l}}{\delta_x}\rfloor,\,
i_y=\lfloor \frac{y-y_{l}}{\delta_y}\rfloor,\,
i_z=\lfloor \frac{z-z_{l}}{\delta_z}\rfloor,\,
i_\psi=\lfloor \frac{\psi}{\delta_\psi}\rfloor,
\end{equation}
where $\lfloor c \rfloor$ is the floor operator returning the largest integer $\le c$.
The tuple $(i_x, i_y, i_z, i_\psi)$ uniquely indexes a TCR cell and can be used as a key to retrieve end-effector goals and motion solutions.
%


%
%

\subsection{Offline Library Building with Compression}
\label{sec:methodology-library}

\begin{algorithm}
\caption{$\mathtt{BuildLibrary}(\mathcal{R}, \adapt, \qs)$}
\label{alg:library}
\SetNoFillComment

\tcc{Initialize library}

$\library \gets$ Empty root motion list\;
\nllabel{lib:init_library}

$\map \gets$ Empty hash map from a TCR $r$ to root motion $\trajrep_\rt$ and joint goal of the region $q$\;
\nllabel{lib:init_map}

\While{$\mathcal{R} \neq \emptyset$}{
    \nllabel{lib:while}

    \tcc{Sample a new root region}
    $r \gets \mathtt{UniformSampling}(\mathcal{R})$\;
    \nllabel{lib:sample_root}

    $\mathcal{R}.\mathtt{Remove}(r)$\;
    \nllabel{lib:remove_root}

    \tcc{Obtain a root goal in joint space}
    $q_\rt \gets \ik(r, q_\start)$\;
    \nllabel{lib:ik_root}

    \If{$q_\rt = null$}{
        \nllabel{lib:ik_root_fail_start}
        \Continue\;
        \nllabel{lib:ik_root_fail_end}
    }

    \tcc{Solve for a root path}
    $\qtraj_\rt \gets \mathtt{Plan}(q_\start, q_\rt, \Cfree(r))$\;
    \nllabel{lib:plan_root}

    \If{$\qtraj_\rt = null$}{
        \nllabel{lib:plan_root_fail_start}
        \Continue\;
        \nllabel{lib:plan_root_fail_end}
    }

    \tcc{Build and store the root motion}
    $\trajrep_\rt \gets \adapt.\mathtt{BuildRootMotion}(\qtraj_\rt)$\;
    \nllabel{lib:build_rootmotion}

    $\library.\mathtt{Append}(\trajrep_\rt)$\;
    \nllabel{lib:append_library}

    \tcc{Register the root region as covered by this root motion}
    $\map(r) \gets \trajrep_\rt, q_\rt$\;
    \nllabel{lib:map_root}

    \tcc{Try to cover nearby regions via adaptation from the same root motion}
    \ForAll{$r_n \in \mathtt{NearestNeighbor}(r, N_\text{neighbor})$}{
        \nllabel{lib:nn_loop}

        $q_n \gets \ik(r_n, q_\rt)$\;
        \nllabel{lib:ik_neighbor}

        \If{$q_n = null$}{
            \nllabel{lib:ik_neighbor_fail_start}
            \Continue\;
            \nllabel{lib:ik_neighbor_fail_end}
        }

        $\qtraj_n \gets \adapt.\mathtt{Adapt}(\trajrep_\rt, q_n)$\;
        \nllabel{lib:adapt_neighbor}

        \If{$\mathtt{IsValidPath}(\qtraj_n) = true$}{
            \nllabel{lib:valid_start}

            $\map(r_n) \gets \trajrep_\rt, q_n$\;
            \nllabel{lib:map_neighbor}

            $\mathcal{R}.\mathtt{Remove}(r_n)$\;
            \nllabel{lib:remove_neighbor}

            \nllabel{lib:valid_end}
        }
    }
}
\Return $\library, \map$\;
\nllabel{lib:return}

\end{algorithm}

After task space discretization, a brute-force library would simply plan and store a path for every TCR. This is often impractical due to its large memory footprint.

\method instead constructs a compressed library by planning root paths for a subset of TCRs, and covering additional TCRs by adapting root solutions, as summarized in \autoref{alg:library}.
The algorithm takes TCRs $\mathcal{R}$, adaptation method $\adapt$ and starting configuration $\qs$ as input. It initializes an empty root-motion list $\library$ and an empty hash map $\map$ that associates each region $r$ with a root motion $\Pi_\rt$ and the joint goal of the reigon $q$ (\autoref{lib:init_library}-\autoref{lib:init_map}).
While $\mathcal{R}$ is not empty and we have uncovered regions, the algorithm uniformly samples a region $r$ from the TCRs $\mathcal{R}$ as root region and removes it from further consideration (\autoref{lib:while}-\autoref{lib:remove_root}). 
It then solves inverse kinematics (IK), warm-started from $\qs$, to obtain a root goal configuration $q_\rt$ for the task at $r$ (\autoref{lib:ik_root}). If $\ik$ fails, the iteration terminates and continues to the next sample (\autoref{lib:ik_root_fail_start}-\autoref{lib:ik_root_fail_end}).

Given a feasible IK goal, a root joint-space path $\qtraj_\rt$ is planned from $q_\start$ to $q_\rt$ with free configuration space $\Cfree(r)$ conditioned on the TCR $r$ (\autoref{lib:plan_root}), where collisions account for the swept volume induced by all possible goal poses within $r$~\cite{cover}.
In this work, we use RRT-Connect~\cite{rrt-connect} to plan the initial solution and run simplification and smoothing~\cite{ompl} to refine the path.
If planning fails, the task is considered infeasible and the algorithm continues to the next region (\autoref{lib:plan_root_fail_start}-\autoref{lib:plan_root_fail_end}).
Otherwise, the resulting root path is converted into a root motion representation $\trajrep_\rt$ by $\mathtt{BuildRootMotion}$ of adaptation method $\adapt$, and added to library $\library$ (\autoref{lib:build_rootmotion}-\autoref{lib:append_library}).
The motion representation is method-dependent. For example, $\trajrep_\rt$ for linear interpolation is the path $\qtraj_\rt$ itself. Please refer to~\autoref{sec:methodology-adapt} for details.
The root region $r$ is then covered by $\trajrep_\rt$ conditioned on $q_\rt$, and being registered in the map (\autoref{lib:map_root}).

Finally, the algorithm attempts to expand region coverage by iterating over a set of $N_\text{neighbor}$ nearest neighbors $r_n$ around the root region (\autoref{lib:nn_loop}). In this work, we choose $N_\text{neighbor} = 1000$.
For each neighbor, it solves IK, warm-started from $q_\rt$, to obtain a joint goal $q_n$ of the neighbor region (\autoref{lib:ik_neighbor}-\autoref{lib:ik_neighbor_fail_end}). If successful, the adaptation module $\adapt$ generates a candidate path $\qtraj_n$ by transforming the stored root motion $\trajrep_\rt$ conditioned on $q_n$ with method $\mathtt{Adapt}$ (\autoref{lib:adapt_neighbor}).
If the adapted path can both reach the goal and is collision free, the neighbor region is marked as covered in $\map$ and removed from $\mathcal{R}$ (\autoref{lib:valid_start}-\autoref{lib:remove_neighbor}).
When $\mathcal{R}$ becomes empty, it returns the library $\library$ and the region-to-solution map $\map$ (\autoref{lib:return}).
%

\subsection{Adaptation Methods}
\label{sec:methodology-adapt}

Given a root path $\qtraj_\rt$, an adaptation method $\adapt$ first generates method-specific root motion $\trajrep_\rt$ from $\qtraj_\rt$. Later when a target joint goal $q_\goal$ is provided, $\adapt$ transforms the root motion $\trajrep_\rt$ to a candidate solution $\qtraj$. 

All adaptations in \method are designed to be fast and deterministic.
To preserve efficiency, we deliberately \emph{exclude} obstacle avoidance terms from the adaptation objectives. Instead, collision checking is performed during offline verification (\autoref{lib:valid_start} of \autoref{alg:library}), ensuring that the adapted path is collision-free.
This design introduces a trade-off. Ignoring obstacle constraints, certain neighboring tasks that could otherwise be solved through a heavier optimization process, may no longer be covered.
In practice, we find that our methods can still achieve up to 97\% compression ratios without obstacle terms (\autoref{sec:experiment-offline}).
Importantly, this design choice substantially accelerates online queries, enabling millisecond-level responses and aligning with the goal of constant-time planning after pre-processing.

Below we introduce three different adaptation strategies, each in terms of its general idea, what is stored as $\trajrep_\rt$, and how $\mathtt{Adapt}$ produces path $\qtraj$.

\subsubsection{Linear Interpolation (LI)}
\label{sec:methodology-adapt-interpolation}
For nearby goals, the simplest adaption strategy is to connect the end of the root path $q_\rt$ with the desired goal $q_\goal$ through linear interpolation. Specifically, we construct a waypoint sequence $\qtraj_{\text{end}} = \{q_1, \dots, q_{T_\text{end}}\}$ of length $T_\text{end}$,
whose first configuration is $q_1 = q_\rt$ and whose last configuration is $q_{T_\text{end}} = q_\goal$:
\begin{equation}
\label{eq:interpolation}
q_n =
(1 - \tfrac{n}{T_{\text{end}}}) q_{\rt}
+
\tfrac{n}{T_{\text{end}}} q_{\goal},
\,\, n\in[1, T_\text{end}],
\end{equation}
and append $\qtraj_\text{end}$ to the root path $\qtraj_\rt$. This produces a joint-space path that reaches $q_\goal$ exactly. $T_\text{end}$ can be chosen with desired path resolution.
However, this joint path can introduce discontinuities in end-effector motion in the work space.
Therefore, we additionally include a continuity verification from~\cite{expansion-grr} in the library building phase.
The intuition of the verification is that, given the joint-space path being a straight line, the end-effector motion in task space should also be nearly a straight line with some threshold.
For this method, $\adapt.\mathtt{BuildRootMotion}$ simply stores the root path itself as the root motion representation, i.e., $\trajrep_\rt \equiv \qtraj_\rt$.
$\adapt.\mathtt{Adapt}$, as described in ~\autoref{eq:interpolation}, takes the root path $\qtraj_\rt$, builds interpolation sequence $\qtraj_\text{end}$, and returns the concatenated path $\qtraj_\rt \,\|\, \qtraj_\text{end}$.

\subsubsection{Dynamic Movement Primitives (DMPs)}
\label{sec:methodology-adapt-dmp}
DMPs provide a compact dynamical system representation of a demonstrated path and allow fast goal retargeting while preserving qualitative shape~\cite{dmp_survey}.
Given a path as reference, we model each joint path using a second-order stable dynamical system with a learned nonlinear forcing term:
\begin{equation}
\label{eq:dmp}
\ddot{q}_i
=
\alpha_y \left( \beta_y ( g_i - q_i ) - \dot{q}_i \right) + f_i(s),
\end{equation}
where $\alpha_y, \beta_y > 0$ are gain hyper-parameters controlling convergence and damping, $g_i$ is the goal of each joint, $i$ denotes the joint index, and $s \in [0,1]$ is a phase variable for motion progression.
The forcing term $f_i(s)$ is represented as a weighted sum of $N_B$ fixed basis functions and is learned from the reference path.
Compared to appending a terminal interpolation segment, DMPs are slower but can adjust the trajectory globally.

For this method, $\adapt.\mathtt{BuildRootMotion}$ fits independent DMPs with $\qtraj_\rt$ to each joint dimension and stores the learned basis weights as the root motion representation $\trajrep_\rt$. 
In contrast to storing all $T$ waypoints, DMPs require only $O(N_B)$ parameters per joint.
The function $\adapt.\mathtt{Adapt}$ performs a forward rollout of~\autoref{eq:dmp} with the goal set to the given $q_\goal$, producing the adapted trajectory $\qtraj$. 
The rollout is executed for a fixed number of discrete time steps $T$, consistent with the resolution of $\qtraj_\rt$.

\subsubsection{Simple Trajectory Optimization (STO)}
\label{sec:methodology-adapt-opt}
The STO adaptation performs a simplified trajectory refinement inspired by trajectory optimization via sequential convex optimization (TrajOpt)~\cite{trajopt}, and by the observation that stored motions can serve as effective warm-starts for optimization~\cite{memory_of_motion}.
Given the stored root path $\qtraj_\rt$ as initialization, we solve a single convex quadratic program that minimizes joint-space velocity and acceleration while penalizing deviation from the stored root path, subject to boundary and joint-limit constraints. Let $q_{0:T} := (q_0,\dots,q_T)$ denote the path variables, and let $q_{\rt, 0:T}$ denote the waypoints of the root path. The QP solves:
\begin{equation}
\label{eq:opt}
\begin{gathered}
\min_{q_{0:T}}\,
w_{\mathrm{v}} \| D_1 \, q_{0:T} \|_2^2
+
w_{\mathrm{a}} \| D_2 \, q_{0:T} \|_2^2
+
w_{\mathrm{s}} \| q_{0:T} - q_{\rt, 0:T} \|_2^2 \\
\text{s.t.}\quad
q_0 = q_\start,\,\,
q_T = q_\goal,\,\,
q_{\min} \le q_t \le q_{\max}
\,\, \forall t ,
\end{gathered}
\end{equation}
where $w_{\mathrm{v}}, w_{\mathrm{a}}, w_{\mathrm{s}} > 0$ are weighting parameters, $D_1$ and $D_2$ denote first- and second-order finite-difference operators along the path, 

For this method, $\adapt.\mathtt{BuildRootMotion}$ stores the root path $\qtraj_\rt$ as the root motion representation $\trajrep_\rt$, which will be used as the initialization and reference trajectory in the QP.
The function $\adapt.\mathtt{Adapt}$ solves the convex QP in \autoref{eq:opt}, and returns an optimized path $\qtraj$ satisfying the desired new goal $q_\goal$.
In our implementation, the QP is solved with OSQP~\cite{osqp}.

\subsection{Online Queries}
\label{sec:methodology-online}

\method answers an online query by retrieving and adapting a stored root motion from the compressed library, as summarized in \autoref{alg:online}. 
Given a sensed object pose $\T_o \in \Ts$, the procedure first computes its associated TCR index (\autoref{eq:tcr-index}) to find TCR region $r$ (\autoref{query:index}).
If the index is not covered by the map, the query terminates and returns null, indicating that solution is not available for this problem (\autoref{query:map_check_start} - \autoref{query:map_check_end}).
Otherwise, the map returns the corresponding root motion $\trajrep_\rt$ together with the goal joint configuration $q_\goal$ for this problem (\autoref{query:map}).
Finally, the adaptation module $\adapt$ transforms $\trajrep_\rt$ into a joint-space path conditioned on $q_\goal$, producing the solution $\qtraj_\text{sol}$ (\autoref{query:adapt}-\autoref{query:return}).

\begin{algorithm}
\caption{$\mathtt{Query}(\T_o, \map, \library, \adapt)$}
\label{alg:online}
\SetNoFillComment

\tcc{Find TCR region}
$r \gets \mathtt{GetRegion}(\T_o)$\;
\nllabel{query:index}

\tcc{Acquire joint goal and root motion}
\If{$r \notin \map$}{
    \nllabel{query:map_check_start}
    \Return $null$\;
    \nllabel{query:map_check_end}
}

$\trajrep_\rt, q_\goal \gets \map(r)$\;
\nllabel{query:map}

\tcc{Apply adaptation}
$\qtraj_\text{sol} \gets \adapt.\mathtt{Adapt}(\trajrep_\rt, q_\goal)$\;
\nllabel{query:adapt}

\Return $\qtraj_\text{sol}$\;
\nllabel{query:return}

\end{algorithm}

The online phase therefore reduces planning to a constant-time retrieval followed by lightweight goal-conditioned deformation. The online cost is dominated by two components: (i) TCR indexing and retrieval, and (ii) adaptation cost.

As shown in \autoref{eq:tcr-index}, the acquisition of TCR index and TCR region $r$ is a fixed $O(1)$ cost arithmetic operation, and the retrieval of $\trajrep_\rt$ from $\map$ and $\library$ are expected $O(1)$ with hash lookup and array indexing.

Among the adaptation methods, linear interpolation is $O(T_\text{end})$, where $T_\text{end}$ is the number of interpolated waypoints from the root end to the desired goal. In this work, we use $T_\text{end}=10$.
The time complexity of DMP rollout is $O(T \cdot N_B)$ with fixed $B$ basis functions and $T$ generated waypoints.
For the optimization-based adapter STO, the runtime scales approximately as $O(T \cdot I)$, where $I$ denotes the maximum number of solver iterations.
In motion planning practice, where trajectories are stored at a fixed resolution (in this work, $T = 200$), this yields a predictable fixed runtime.
Since all adapter parameters, such as $T_\text{end}, T, N_B$ and $I$ are fixed offline, the online query stage has constant-time with respect to the online input.

%
%

\section{Experiments}
\label{sec:experiment}

To assess \method, we conduct experiments with two robotic manipulators across three simulated environments with varying geometric constraints and planning difficulty, evaluating both offline library quality and online planning performance. We further validate the method in a real-world setting.  

\begin{figure}[ht!]
    \centering
    \includegraphics[width=0.9\linewidth]{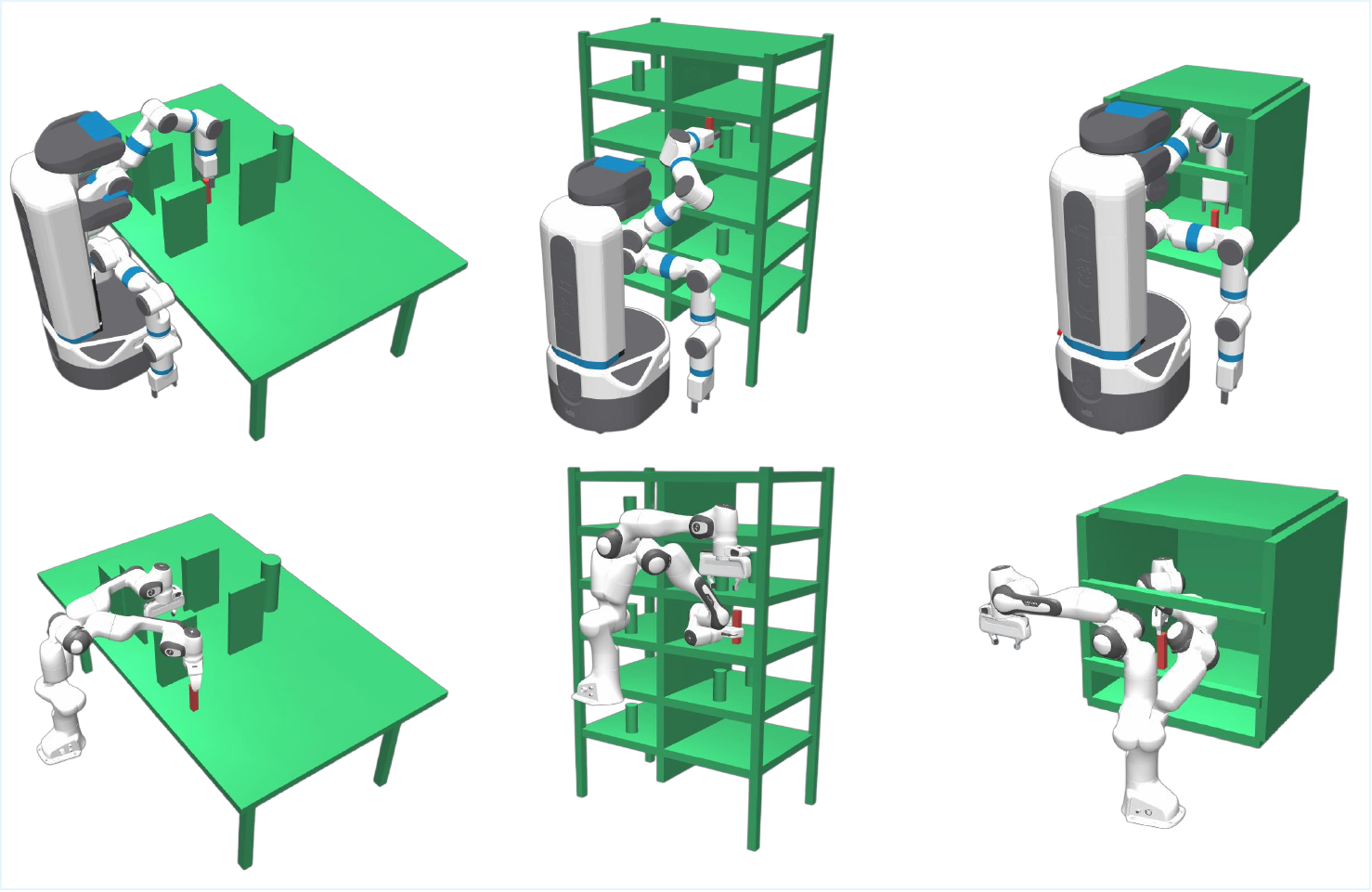}
    \caption{Fetch (top) and Panda (bottom) manipulators planning in Table (left), Shelf (middle) and Cage (right) environments adapted from~\cite{chamzas2022-motion-bench-maker}. } 
    \label{fig:exp_env}
\vspace{-2ex}
\end{figure}

\begin{figure*}[ht!]
    \centering
    \includegraphics[width=0.98\linewidth]{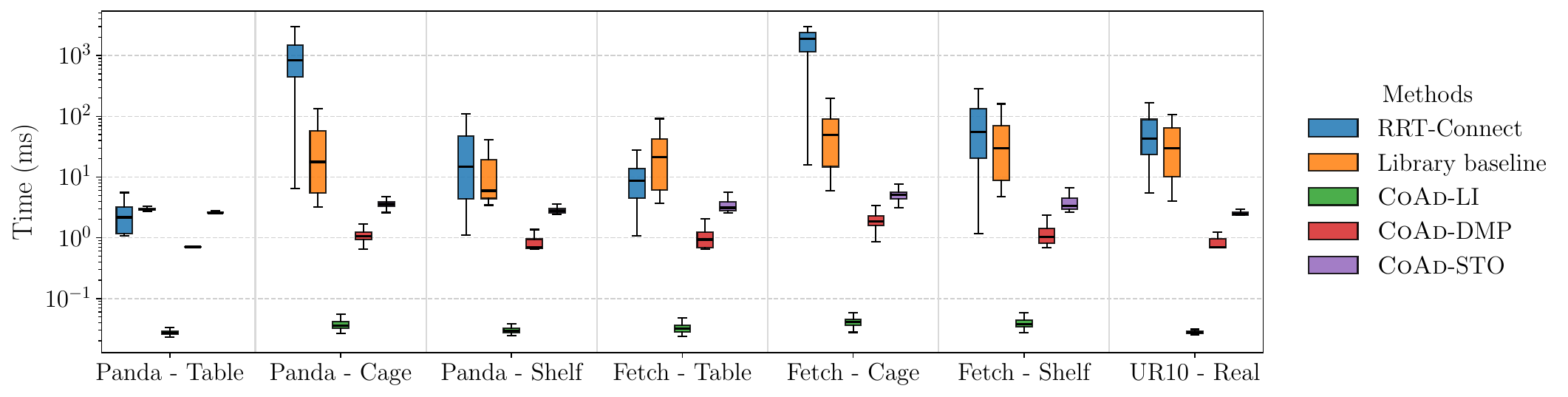}
    \caption{Comparison of Planning Times for all methods. Y-axis is in log-scale milli-seconds. RRT-connect has a 3.0 second timeout. The library baseline has the same number of paths as the uncompressed \method.  } 
    \label{fig:times}
\vspace{-2ex}
\end{figure*}

\subsection{Experiment Setup}
\label{sec:experiment-setup}
We evaluate the performance of our method in simulation on two manipulators: 
%
Panda (7-\DOF) and Fetch (8-\DOF)
in three representative environments: 
%
Table, Cage, and Shelf, as demonstrated in~\autoref{fig:exp_env} adapted from~\cite{chamzas2022-motion-bench-maker}.
We also validate the proposed methods in real world with a pyhsical UR10 robot in a real table environment, as shown in \autoref{fig:intro}.
These environments are designed to reflect different geometric complexity, from open tabletop manipulation (Table), to narrow passages (Shelf), and cluttered obstacle configurations (Cage).
The evaluations focus on two key aspects: 
(i) compression ratio and memory-quality trade-off of the offline plan library,  
(ii) online planning success rate, efficiency, and path quality.
In this experimental setting, the robots are tasked with reaching a desired end-effector pose corresponding to a grasp pose for the object. 
Each query specifies a goal pose sampled from the potential task space.
In our setting, the static environment obstacles (e.g., table, shelf, cage structure) remain fixed, while the query-dependent object to be grasped varies across tasks. 
All experiments are conducted in Mujoco simulation on a machine with Intel i7-14700 and 32\,GB RAM. Collision checking and kinematic computations are performed using the same geometric models for both offline library generation and online query evaluation.
For each robot–environment pair, we first generate all the TCRs, followed by inverse kinematics ($\ik$) to obtain joint-space goal configurations. 
For benchmarking purposes, a full solution library is generated by solving and storing each TCR. This full library serves as the uncompressed baseline.
Compressed plan libraries are then generated with the three adaptation methods described in~\autoref{sec:methodology-adapt}. We denote them as \method-LI, \method-DMP and \method-STO in the following content.
%
%
We measure path quality using joint-space path length. For a path waypoint 
$\pi = [q_0, q_1, \dots, q_T]$, we define:
\begin{equation}
L(\pi) = \sum \nolimits_{i} \| q_{i+1} - q_i \|_2 .
\end{equation}

\begin{table}[ht]
\centering
\caption{Plan Library}
\label{tab:library}
\begin{tabular}{c|c|cccc}
\hline
\textbf{Env.} &
  \textbf{Metric} &
  \textbf{\begin{tabular}[c]{@{}c@{}}Full \\ Library\end{tabular}} &
  \textbf{\begin{tabular}[c]{@{}c@{}}$\method$\\ -LI\end{tabular}} &
  \textbf{\begin{tabular}[c]{@{}c@{}}$\method$\\ -DMP\end{tabular}} &
  \textbf{\begin{tabular}[c]{@{}c@{}}$\method$\\ -STO\end{tabular}} \\ \hline
\multirow{4}{*}{\begin{tabular}[c]{@{}c@{}}Panda\\ -Table\end{tabular}} & Comp. (\%)    & -     & 97.77          & \textbf{97.91} & 97.87          \\
                                                                        & Time (ms)     & 0.012 & \textbf{0.024} & 0.721          & 2.491          \\
                                                                        & Quality (rad) & 2.46  & 3.20           & \textbf{2.76}  & 3.19           \\
                                                                        & Size          & 5033  & 112            & 105            & 107            \\ \hline
\multirow{4}{*}{\begin{tabular}[c]{@{}c@{}}Panda\\ -Cage\end{tabular}}  & Comp. (\%)    & -     & 93.05          & 74.06          & \textbf{93.36} \\
                                                                        & Time (ms)     & 0.013 & \textbf{0.026} & 1.20           & 3.40           \\
                                                                        & Quality (rad) & 8.87  & 9.05           & \textbf{7.99}  & 8.88           \\
                                                                        & Size          & 5829  & 405            & 1512           & 387            \\ \hline
\multirow{4}{*}{\begin{tabular}[c]{@{}c@{}}Panda\\ -Shelf\end{tabular}} & Comp. (\%)    & -     & 97.61          & 90.39          & \textbf{97.84} \\
                                                                        & Time (ms)     & 0.013 & \textbf{0.025} & 0.869          & 2.764          \\
                                                                        & Quality (rad) & 5.31  & \textbf{5.39}  & 5.58           & 5.49           \\
                                                                        & Size          & 30013 & 716            & 2883           & 649            \\ \hline
\multirow{4}{*}{\begin{tabular}[c]{@{}c@{}}Fetch\\ -Table\end{tabular}} & Comp. (\%)    & -     & 87.25          & 78.98          & \textbf{88.88} \\
                                                                        & Time (ms)     & 0.012 & \textbf{0.025} & 1.058          & 3.278          \\
                                                                        & Quality (rad) & 7.28  & \textbf{7.34}  & 7.54           & 7.99           \\
                                                                        & Size          & 7646  & 975            & 1607           & 850            \\ \hline
\multirow{4}{*}{\begin{tabular}[c]{@{}c@{}}Fetch\\ -Cage\end{tabular}}  & Comp. (\%)    & -     & 67.89          & 36.97          & \textbf{72.63} \\
                                                                        & Time (ms)     & 0.013 & \textbf{0.028} & 2.034          & 5.139          \\
                                                                        & Quality (rad) & 15.70 & 15.10          & \textbf{14.72} & 15.18          \\
                                                                        & Size          & 844   & 271            & 532            & 231            \\ \hline
\multirow{4}{*}{\begin{tabular}[c]{@{}c@{}}Fetch\\ -Shelf\end{tabular}} & Comp. (\%)    & -     & 89.93          & 54.08          & \textbf{90.37} \\
                                                                        & Time (ms)     & 0.012 & \textbf{0.030} & 10.151         & 3.578          \\
                                                                        & Quality (rad) & 10.02 & 8.61           & \textbf{7.88}  & 9.01           \\
                                                                        & Size          & 33468 & 3369           & 15367          & 3223           \\ \hline
\multirow{4}{*}{\begin{tabular}[c]{@{}c@{}}UR10\\ -Real\end{tabular}}   & Comp. (\%)    & -     & 89.38          & 76.76          & \textbf{90.73} \\
                                                                        & Time (ms)     & 0.011 & \textbf{0.023} & 1.009          & 2.680          \\
                                                                        & Quality (rad) & 7.07  & 7.04           & 7.35           & \textbf{7.02}  \\
                                                                        & Size          & 8676  & 921            & 2016           & 804            \\ \hline
\end{tabular}
\end{table}

\subsection{Offline Library Building}
\label{sec:experiment-offline}
We first evaluate the compression performance of \method with three different adaptation methods. 
\autoref{tab:library} summarizes the compression ratio, defined as the percentage reduction in the number of stored paths relative to the full library, together with the average adaptation time and the resulting path quality.
For all adaptation methods, \method achieves a significant reduction in storage requirements. 
Among the adaptation methods, \method-STO achieves the highest compression, reducing the library size by 72--99\%. 
\method-LI achieves similar compression of 67--97\%.
\method-DMP compresses less in more constrained environments, but still achieves 50--97\% compression across all problems. 
%
%
Among the adaptation methods, \method-LI has the fastest adaptation time, but this naive adaptation method results in the worst path quality. \method-DMP displays millisecond-level adaptation times while producing significantly better path quality.
These results demonstrate that substantial memory reduction can be achieved without sacrificing task coverage, planning speed or path quality under the varied-goal setting.

\begin{table*}[ht]
\centering
\caption{Path Quality and Success Rate}
\label{tab:result}

\begin{tabular}{c|c|ccccc}
\hline
Robot / Env &
  Metric &
  RRT-Connect &
  Library Baseline &
  \begin{tabular}[c]{@{}c@{}}$\method$\\ -LI\end{tabular} &
  \begin{tabular}[c]{@{}c@{}}$\method$\\ -DMP\end{tabular} &
  \begin{tabular}[c]{@{}c@{}}$\method$\\ -STO\end{tabular} \\ \hline
\multirow{2}{*}{Panda - Table} &
  Success (\%) &
  $\textbf{100}$ &
  99.90 &
  $\textbf{100}$ &
  $\textbf{100}$ &
  $\textbf{100}$ \\
 &
  Quality (rad) &
  6.28 $\pm$ 2.07 &
  3.10 $\pm$ 1.84 &
  3.19 $\pm$ 1.12 &
  $\textbf{2.69 $\pm$ 1.19}$ &
  3.13 $\pm$ 1.41 \\ \hline
\multirow{2}{*}{Panda - Cage} &
  Success (\%) &
  87.80 &
  90.20 &
  $\textbf{100}$ &
  $\textbf{100}$ &
  $\textbf{100}$ \\
 &
  Quality (rad) &
  22.18 $\pm$ 6.47 &
  10.49 $\pm$ 5.21 &
  9.08 $\pm$ 2.12 &
  $\textbf{7.98 $\pm$ 2.04}$ &
  8.69 $\pm$ 2.12 \\ \hline
\multirow{2}{*}{Panda - Shelf} &
  Success (\%) &
  97.90 &
  98.20 &
  $\textbf{100}$ &
  $\textbf{100}$ &
  $\textbf{100}$ \\
 &
  Quality (rad) &
  11.80 $\pm$ 5.38 &
  5.85 $\pm$ 2.31 &
  $\textbf{5.35 $\pm$ 1.35}$ &
  5.65 $\pm$ 1.82 &
  5.54 $\pm$ 1.39 \\ \hline
\multirow{2}{*}{Fetch - Table} &
  Success (\%) &
  $\textbf{100}$ &
  64.50 &
  $\textbf{100}$ &
  $\textbf{100}$ &
  $\textbf{100}$ \\
 &
  Quality (rad) &
  17.23 $\pm$ 8.39 &
  7.38 $\pm$ 3.94 &
  7.37 $\pm$ 3.34 &
  $\textbf{7.29 $\pm$ 2.85}$ &
  7.94 $\pm$ 3.51 \\ \hline
\multirow{2}{*}{Fetch - Cage} &
  Success (\%) &
  39.70 &
  49.40 &
  $\textbf{100}$ &
  $\textbf{100}$ &
  $\textbf{100}$ \\
 &
  Quality (rad) &
  37.11 $\pm$ 11.50 &
  16.02 $\pm$ 5.66 &
  $\textbf{14.90 $\pm$ 4.27}$ &
  15.01 $\pm$ 4.63 &
  15.04 $\pm$ 4.78 \\ \hline
\multirow{2}{*}{Fetch - Shelf} &
  Success (\%) &
  98.10 &
  64.50 &
  $\textbf{100}$ &
  $\textbf{100}$ &
  $\textbf{100}$ \\
 &
  Quality (rad) &
  21.29 $\pm$ 11.01 &
  9.97 $\pm$ 6.15 &
  8.81 $\pm$ 4.37 &
  $\textbf{8.50 $\pm$ 3.95}$ &
  8.77 $\pm$ 4.63 \\ \hline
\multirow{2}{*}{UR10 - Real} &
  Success (\%) &
  $\textbf{100}$ &
  85.00 &
  $\textbf{100}$ &
  $\textbf{100}$ &
  $\textbf{100}$ \\
 &
  Quality (rad) &
  17.19 $\pm$ 6.96 &
  8.38 $\pm$ 4.46 &
  8.17 $\pm$ 5.11 &
  $\textbf{7.38 $\pm$ 4.70}$ &
  8.29 $\pm$ 4.52 \\ \hline
\end{tabular}
\vspace{-4ex}  
\end{table*}

\subsection{Online Planning}
\label{sec:experiment-online}
We evaluate the online planning performance over 1000 random queries per setting.
The quantitative results for success rate and path quality are summarized in~\autoref{tab:result}.
\method is benchmarked against two baselines: RRT-Connect~\cite{rrt-connect} and a library-based approach~\cite{experience_framework}. 
RRT-Connect is executed with a timeout of 3.0 seconds. 
We use the library baseline as a representative of the experience-based methods~\cite{experience_framework}. We construct a library of paths with $N$ object poses sampled from the task set, where $N$ matches the size of the full library of \method for each problem. Unlike \method, however, this baseline does not discretize the task space and instead store paths by the continuous sampled object poses.
For the library baseline, candidate paths are retrieved based on the Euclidean distance between the queried object pose and stored object poses, and the five nearest neighbors are considered. 
Each retrieved path is adapted using a repair strategy similar to~\cite{experience_framework}: invalid segments are repaired with RRT-Connect and the path is rewired to the queried goal using linear interpolation, falling back to RRT-Connect if interpolation fails. 
The first successful adaptation is returned. Otherwise, the query is recorded as a failure.

Across all robot–environment pairs, \method achieves a 100\% success rate, while RRT-Connect and the library baseline occasionally fail in more constrained environments. 
In terms of path quality, all \method adaptations outperform RRT-Connect and achieve comparable quality to the library baseline, with \method-DMP providing the best path quality across all experiments.

\autoref{fig:times} shows the distribution of planning times. 
\method significantly outperforms all baselines, achieving up to two orders of magnitude speedup in the open Table environment and nearly three orders of magnitude in more constrained environments such as Cage and Shelf. 
Among the adaptation methods, \method-LI is the fastest and requires only constant-time adaptation, as it performs linear interpolation to connect to the new goal. \method-DMP and \method-STO achieve planning times on the order of tens of milliseconds while producing higher-quality paths.

\subsection{Real-world Validation}
To validate practical applicability, we evaluate with a UR10 robot in a table scenario as shown in \autoref{fig:intro}. The compressed library is generated in simulation and transferred directly to the physical system.
We assume that the geometry and poses of the fixed obstacles, as well as those of the goal object, are known a priori. During execution, the pose of the goal object is estimated online using an AprilTag-based vision system.

We randomly placed the goal object within the predefined task space for 20 trials.
The observed performance trends, including success rates, query time, and path quality, are consistent with the simulation results and are reported in~\autoref{fig:times} and \autoref{tab:result}.
This demonstrates that the proposed framework is applicable in real robotic systems.

\section{Conclusion and Future Work}
\label{sec:discussion}

In this paper, we present \method, a framework for goal-varying manipulation planning that combines task-space discretizations via TCRs, offline construction of a compressed library, and efficient online adaptation.
By shifting the majority of the planning and adaptation verification effort to an offline phase, \method enables constant-time query while significantly reducing memory requirements. 

Currently, the root regions are randomly selected.
A promising direction for future work is to design more representative root-selection algorithms to further minimize the number of root paths required.
Another important extension is to relax the semi-static assumption and build a motion library suitable for more general semi-static environments, which includes both varying goals and moving obstacles~\cite{cover}. 
%

\bibliographystyle{IEEEtran_ShortURL}
\bibliography{references}

\end{document}